\documentclass[11pt]{article}

\usepackage[margin=1in]{geometry}
\usepackage[T1]{fontenc}
\usepackage[utf8]{inputenc}
\usepackage{amsmath,amssymb,amsthm,mathtools,bm}
\usepackage{booktabs,multirow,array}
\usepackage{graphicx}
\usepackage{float}
\usepackage{xcolor}
\usepackage{hyperref}
\usepackage{enumitem}
\usepackage{algorithm}
\usepackage[noend]{algpseudocode}
\usepackage{listings}
\usepackage{tikz}
\usepackage{pgfplots}
\usepackage{caption}

\newcommand{\inkh}{\textsc{InKH}}
\newcommand{\modelonly}{ModelOnly}
\newcommand{\toolagent}{ToolAgent}
\newcommand{\simplemem}{SimpleMem}

\newcommand{\wikiwalk}{WikiWalk}
\newcommand{\khnoinv}{Khnoinv}

\usepackage{authblk}

\usepackage{float}
\usepackage{array}

\usepackage{tikz}
\usetikzlibrary{positioning,fit,calc,arrows.meta}

\tikzset{>=Latex}

\usepackage{tabularx}

\usepackage{graphicx}

\usepackage[margin=1in]{geometry}
\usepackage{authblk}
\usepackage{amsmath, amssymb, amsthm}
\usepackage{bm}
\usepackage{microtype}
\usepackage{booktabs}
\usepackage{array}
\usepackage{multirow}
\usepackage{graphicx}
\usepackage{tikz}

\usepackage{algorithm}
\usepackage{algpseudocode}
\usepackage{listings}
\usepackage{xcolor}
\usepackage[numbers,sort&compress]{natbib}
\usepackage{hyperref}
\hypersetup{
  colorlinks=true,
  linkcolor=blue,
  citecolor=blue,
  urlcolor=blue
}

\usetikzlibrary{arrows.meta,positioning,shapes.geometric,fit,calc}
\pgfplotsset{compat=1.18}
\hypersetup{colorlinks=true,citecolor=blue,linkcolor=blue,urlcolor=blue}

\newtheorem{proposition}{Proposition}

\title{Absorbing Complexity: An Interaction-Native Knowledge Harness for Financial LLM Agents}

\author{

\includegraphics[height=2cm]{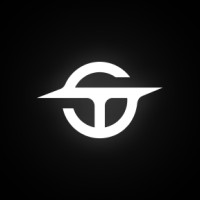}\\
True Trading -- AI\\
\vspace{1.2em} 

\begin{center}
\setlength{\tabcolsep}{0pt}

\begin{minipage}[t]{0.28\textwidth}\centering
\textbf{Ailiya Borjigin}\\
True Trading\\
\texttt{ailiya.borjigin@gmail.com}
\end{minipage}\hspace{12pt}%
\begin{minipage}[t]{0.22\textwidth}\centering
\textbf{Igor Stadnyk}\\
True Trading\\
\texttt{igor@true.trading}
\end{minipage}\hspace{12pt}%
\begin{minipage}[t]{0.22\textwidth}\centering
\textbf{Ben Bilski}\\
True Trading\\
\texttt{ben@true.trading}
\end{minipage}

\vspace{1.2em}

\begin{minipage}[t]{0.26\textwidth}\centering
\textbf{Maksym Chikita}\\
Inc4.net\\
\texttt{m.chikita@inc4.net}
\end{minipage}\hspace{18pt}%
\begin{minipage}[t]{0.26\textwidth}\centering
\textbf{Dmytro Kyrylenko}\\
Inc4.net\\
\texttt{dmitriy.k@inc4.net}
\end{minipage}

\vspace{1.2em}

\begin{minipage}[t]{0.26\textwidth}\centering
\textbf{Sofiia Pidturkina}\\
Inc4.net\\
\texttt{s.pidturkina@inc4.net}
\end{minipage}\hspace{18pt}%
\begin{minipage}[t]{0.26\textwidth}\centering
\textbf{Julia Stadnyk}\\
Inc4.net\\
\texttt{julia.b@inc4.net}
\end{minipage}

\end{center}

}

\date{}

\begin{document}
\maketitle

\begin{abstract}
\textbf{Problem.}
Financial AI adoption is constrained not only by model quality but by \emph{financial cognition friction}: users must repeatedly restate fragmented information, historical judgments, risk preferences, and evolving market assumptions. Existing financial agents remain largely turn-based and workflow-disposable: they answer, retrieve, act, and forget. In high-stakes settings such as market analysis, copy-trading evaluation, and trade preparation, this leads to latency, repeated error, stale-memory reuse, and weak traceability.

\textbf{Approach.}
We propose the \emph{interaction-native knowledge harness} (\inkh), a financial-agent architecture that absorbs complexity into the system rather than transferring it to the user. The architecture combines: (i) an event-stream view of user, market, and tool updates; (ii) a bounded \emph{working context buffer} assembled by passive knowledge injection rather than agent-driven memory search; (iii) a temporal knowledge graph as the low-latency retrieval substrate; (iv) a wiki audit surface for human-readable governance; and (v) background extraction, maturity, decay, and write-time invalidation.

\textbf{Results.}
We report a reproducible controlled benchmark with 24 seeds, 4 rounds, 80 episodes per round, and 6 baselines, for 7{,}680 workflows per baseline and 46{,}080 baseline-conditioned evaluations overall. \inkh achieves mean task quality $0.815$ at $900$ ms mean latency. Relative to an agent-driven wiki-walk memory, \inkh reduces latency by $82.95\%$, token cost by $82.29\%$, and stale-knowledge usage by $96.58\%$, while improving task quality by $0.108$ and decision traceability by $0.461$. Relative to an otherwise similar temporal-graph system without invalidation, \inkh improves quality by $0.050$ and reduces stale-memory usage by $96.58\%$ with comparable serving cost.

\textbf{Impact.}
The results support a central design claim: \emph{adoption happens when complexity is absorbed by the system rather than transferred to the user}. In financial AI, this means continuously transforming interaction traces into structured, persistent, and operational knowledge, while keeping execution safety and auditability explicit.
\end{abstract}

\section{Introduction}

Large language model agents are gradually moving from one-shot question answering toward sustained financial workflows. In practice, these workflows include market analysis, portfolio review, trader evaluation, risk checking, order preparation, and user confirmation. Yet most systems still behave as turn-based assistants: the user asks, the agent retrieves, the agent answers, and most of the useful context disappears afterward. This creates recurring latency, duplicated reasoning, fragile personalization, and repeated rediscovery of the same risks.

Our central argument is that \emph{adoption happens when complexity is absorbed by the system rather than transferred to the user}. In financial AI, users should not have to manually coordinate fragmented information, historical judgments, risk preferences, and changing market assumptions. Instead, a production-grade system should continuously convert interaction traces into structured, persistent, and operational knowledge. We call the associated reduction in user burden \emph{financial cognition friction}.

This paper proposes the \emph{interaction-native knowledge harness} (\inkh), a new architecture for continuous financial cognition. The design draws inspiration from \emph{interaction-native} AI systems that treat collaboration as continuous rather than turn-based \cite{thinkingmachines2026}, from compiled-knowledge approaches such as the LLM Wiki pattern \cite{karpathy2026llmwiki}, and from recent work on scalable agent memory, experiential learning, graph-backed memory, and financial agent benchmarks \cite{chhikara2025mem0,yu2023finmem,shinn2023reflexion,zhao2024expel,wang2023voyager,rasmussen2025zep,islam2023financebench,bigeard2025financeagentbenchmark,choi2025finagentbench,lu2025bizfinbench}.

The paper is also intentionally complementary to recent work on \emph{execution-layer} safety in agentic finance. Borjigin et al.\ argue that in agentic crypto trading, ``execution is the new attack surface'' and propose survivability-aware local executors to enforce non-bypassable last-mile constraints \cite{borjigin2026execution}. Related work by Borjigin and He develops constrained execution and auditable compliance layers for cross-market trade execution \cite{borjigin2025safe}. Taken together, those works address the downstream \emph{action plane}; this paper addresses the upstream \emph{cognition plane}. Safe financial agents require both.

The contributions of this paper are fourfold:

\begin{enumerate}[leftmargin=1.5em]
    \item We propose \inkh, an interaction-native architecture for continuous financial cognition built from passive knowledge injection, a bounded working context buffer, temporal graph memory, and a wiki audit surface.
    \item We formalize the architecture with explicit state, knowledge objects, retrieval utility, injection, decay, invalidation, maturity transition, and governance constraints.
    \item We provide algorithms for passive injection, background extraction, and maintenance, together with implementation guidance for graph+wiki systems, entity matching, upsert semantics, and latency budgeting.
    \item We report a reproducible experimental study on a controlled financial benchmark and show that \inkh materially improves quality, latency, stale-memory suppression, repeated error reduction, and traceability relative to memory and non-memory baselines.
\end{enumerate}

\section{Literature Review}

\subsection*{Interaction-native collaboration}

A recent technical report from Thinking Machines Lab argues that current AI systems suffer from a communication bottleneck because they alternate between discrete input and output phases rather than collaborating continuously \cite{thinkingmachines2026}. Their proposed \emph{interaction models} natively process overlapping streams of audio, video, and text, and use a real-time foreground model together with asynchronous background reasoning. This paper adopts the architectural lesson rather than the multimodal training recipe: financial cognition should be \emph{continuous and stateful}, even when operationalized over text and event streams rather than full-duplex speech or video.

\subsection*{Compiled knowledge, memory, and graph-backed retrieval}

Karpathy's LLM Wiki pattern proposes an appealing separation between immutable raw sources, LLM-maintained wiki pages, and explicit ingest/query/lint operations \cite{karpathy2026llmwiki}. In parallel, the memory literature has advanced rapidly. MemGPT frames multi-tier memory management as an operating-system problem \cite{packer2023memgpt}. Mem0 emphasizes production-ready long-term memory and reports large latency and token improvements over full-context baselines \cite{chhikara2025mem0}. A-MEM pushes further toward dynamically organized ``agentic memory'' \cite{xu2025amem}. Zep/Graphiti introduces a temporal knowledge graph memory layer with explicit validity windows and relationship-aware retrieval \cite{rasmussen2025zep}, while a recent survey synthesizes graph-based memory around extraction, storage, retrieval, and evolution \cite{yang2026graphsurvey}. \inkh{} is positioned at a different layer than Graphiti. Graphiti provides temporal validity windows and relationship-aware retrieval at the storage layer, whereas \inkh{} operates at the orchestration layer: retrieval is passive and system-injected rather than agent-requested, and invalidation is performed at write time during background extraction rather than only at query time. \inkh{} further adds a governance layer that gates which knowledge may influence which financial actions based on maturity and action risk. \inkh{} is therefore not a replacement for Graphiti; a production implementation could use Graphiti or a similar temporal graph system as the underlying substrate. Our position is that finance needs both \emph{compiled knowledge} and \emph{governed graph retrieval}: the graph should serve online retrieval, while the wiki remains the audit surface.

\subsection*{Experience accumulation and self-improvement}

Several influential papers show that agents can improve without finetuning by learning from their past trajectories. Reflexion introduces verbal reinforcement and episodic reflection \cite{shinn2023reflexion}. ExpeL extracts reusable knowledge from a set of training experiences \cite{zhao2024expel}. Voyager accumulates reusable skills through an expanding code library \cite{wang2023voyager}. Generative Agents and Self-Refine likewise demonstrate the importance of reflection, memory, and iterative improvement in interactive systems \cite{park2023generative,madaan2023selfrefine}. Self-RAG adds adaptive retrieval and self-critique \cite{asai2023selfrag}. These works support the broader claim that past experience should become future capability, but they do not address the governance and temporal invalidation demands of financial reasoning.

\subsection*{Financial agents, retrieval, and benchmarks}

Finance-specific agent research remains comparatively sparse. FinMem is a notable early effort that builds a layered-memory trading agent with profiling and decision modules \cite{yu2023finmem}. Benchmarking work has accelerated more recently. FinanceBench shows that open-book financial QA remains difficult even for strong retrieval-augmented systems \cite{islam2023financebench}. Finance Agent Benchmark tests real-world research tasks involving SEC filings and expert-authored questions \cite{bigeard2025financeagentbenchmark}. BizFinBench extends benchmark coverage to business-driven financial tasks \cite{lu2025bizfinbench}. FinAgentBench targets \emph{agentic retrieval} in financial question answering \cite{choi2025finagentbench}. These studies clarify what current systems still miss: not only domain knowledge, but robust multi-step retrieval and repeated, governed adaptation.

\subsection*{Execution, governance, and financial agent stacks}

Recent financial-agent papers by Borjigin and collaborators provide an important complementary perspective. \emph{Execution Is the New Attack Surface} formalizes survivability-aware execution middleware for agentic crypto trading \cite{borjigin2026execution}. \emph{Safe and Compliant Cross-Market Trade Execution} adds constrained reinforcement learning, action shielding, and auditable compliance \cite{borjigin2025safe}. Our paper extends that trajectory upstream: before actions are made survivable, the agent must maintain the right evolving financial state.

\subsection*{Classical retrieval and tool-using agents}

The proposed design also builds on standard retrieval and tool-use literature. Retrieval-Augmented Generation (RAG) formalized non-parametric memory augmentation \cite{lewis2020rag}, RETRO showed retrieval at enormous scale \cite{borgeaud2021retro}, ReAct unified reasoning and acting \cite{yao2022react}, and Toolformer demonstrated self-supervised tool use \cite{schick2023toolformer}. \inkh differs from these approaches in one crucial aspect: \emph{retrieval is not solely a query-time decision made by the agent}. Instead, relevant knowledge is injected into a bounded working state before the next reasoning step.

\section{Problem Formulation}

We model a financial agent as operating over an event stream
\[
e_t \in \mathcal{E},
\]
where events may be user turns, tool observations, market updates, portfolio changes, or internally generated risk signals.

A workflow episode is
\[
w_i = \big(e_{t_i:t_i+\ell_i}, a_{i,1:H_i}, o_{i,1:H_i}, y_i\big),
\]
where $a_{i,h}$ is an internal action such as retrieval or tool invocation, $o_{i,h}$ is the resulting observation, and $y_i$ is the user-visible output.

\subsection*{State and knowledge objects}

At time $t$, the agent maintains state
\[
S_t = (U_t, M_t, R_t, X_t, G_t),
\]
where $U_t$ is user state, $M_t$ is market state, $R_t$ is risk state, $X_t$ is workflow or execution state, and $G_t$ is the temporal knowledge graph.

Each knowledge object is represented as
\[
k = (\tau,\sigma,\phi,\omega,c,\mu,\rho,t_f,t_l,t_i).
\]
Here $\tau$ is the type, $\sigma$ the scope, $\phi$ the content, $\omega$ the evidence and provenance, $c$ the confidence, $\mu$ the maturity, $\rho$ the regime tag, $t_f$ the first-seen time, $t_l$ the last-validated time, and $t_i$ the invalidation time if the item has been superseded.

\subsection*{Working context buffer}

Let
\[
(V_t,\iota_t,\chi_t) = \mathrm{Detect}(e_t,S_t)
\]
denote detected active entities, intent, and risk class.

Candidate knowledge is drawn from the graph neighborhood
\[
\mathcal{C}_t = \mathcal{N}_h(V_t;G_t)\setminus \{k : t_i(k)\le t\},
\]
where $\mathcal{N}_h$ is the $h$-hop graph neighborhood and invalidated items are excluded.

Each candidate receives utility
\[
s_t(k)
=
\alpha_{\mathrm{rel}} \mathrm{Rel}(k,e_t)
+
\alpha_{\mathrm{str}} \mathrm{Struct}(k,V_t)
+
\alpha_{\mathrm{mat}} \mathrm{Mat}(\mu(k))
+
\alpha_{\mathrm{fresh}} \mathrm{Fresh}(k,t)
+
\alpha_{\mathrm{reg}} \mathrm{Regime}(k,M_t)
+
\alpha_{\mathrm{trust}} \mathrm{Trust}(k)
-
\alpha_{\mathrm{noise}} \mathrm{Noise}(k).
\]

The \emph{injection operator} under token budget $B$ is
\[
I_t
=
\mathrm{Compress}\!\Big(
\mathrm{TopB}\{s_t(k)\; : \; k \in \mathcal{C}_t \cap \mathcal{A}_t\}
\Big),
\]
where $\mathcal{A}_t$ is the set of governance-admissible items.

The \emph{working context buffer} is then
\[
C_t = \mathrm{Fuse}(U_t,M_t,R_t,X_t,I_t),
\qquad |C_t| \le B.
\]
Here $C_t$ denotes the bounded \emph{context} assembled for the next reasoning step.

This object is the online working state given to the agent for the next step. The conceptual difference from agent-driven retrieval is simple: the system \emph{prepares} the context instead of forcing the model to search for it.

\subsection*{Objective}

The agent is optimized not only for task performance but also for efficient, safe, and knowledge-compounding operation:
\[
J_t
=
\mathbb{E}\!\left[
Q(y_t)
-
\lambda_c \mathrm{Cost}_t
-
\lambda_r \mathrm{ActRisk}_t
+
\lambda_k \mathrm{KnowGain}_t
\right].
\]
This objective formalizes the intended tradeoff between answer quality, serving cost, action risk, and future knowledge benefit.

\subsection*{Knowledge update, decay, invalidation, and maturity}

After a completed workflow, background extraction produces candidate knowledge
\[
Z_i = \mathrm{Extract}(w_i),
\]
and the graph is updated by
\[
G_{t+1} = \mathrm{Upsert}(G_t, Z_i).
\]

Continuous decay is modeled by
\[
d_t(k)
=
\exp\!\big(-\lambda_{\tau(k)}(t-t_l(k))\big)
\exp\!\big(-\gamma \, \mathrm{dist}(\rho(k),\hat{\rho}_t)\big),
\]
where $\lambda_{\tau(k)}$ is type-dependent and $\hat{\rho}_t$ is the current inferred regime.

Effective confidence becomes
\[
c_t^{\mathrm{eff}}(k)
=
c(k)\, d_t(k)\, \mathbb{1}[t_i(k)=\varnothing].
\]

Contradiction-triggered invalidation is handled explicitly: if a new item $k'$ contradicts prior item $k$ with score $\Gamma(k',k)$, then
\[
\Gamma(k',k) > \delta
\quad \Longrightarrow \quad
t_i(k)\leftarrow t.
\]

Maturity evolves as
\[
\mu_{t+1}(k)
=
\Psi\!\big(
\mu_t(k), \nu_t(k), \upsilon_t(k), q_t(k), h_t(k)
\big),
\]
where $\nu_t$ is reuse count, $\upsilon_t$ is validation evidence, $q_t$ is downstream utility attribution, and $h_t$ is human review.

\subsection*{Governance constraints}

A knowledge item may influence a financial action only if governance allows it:
\[
\mathrm{Allow}(a,k)
=
\mathbb{1}\!\left[
c_t^{\mathrm{eff}}(k)\ge \epsilon
\;\land\;
\mu(k)\ge \theta(\mathrm{risk}(a))
\;\land\;
\sigma(k)\in \mathcal{O}(u)
\right].
\]
Here $\mathcal{O}(u)$ is the permitted knowledge overlay for user $u$ and $\theta(\cdot)$ is a monotone threshold increasing with action risk.

\subsection*{Theoretical propositions}

\begin{proposition}[Passive injection versus agent-driven retrieval]
Assume that solving a task without reusable knowledge has expected cost $C_0$. Let related prior knowledge yield expected savings $\Delta>0$ with probability $p$. Let passive injection cost $c_p$ with irrelevant-context penalty $\eta_p$, and let agent-driven retrieval incur additional planning cost $c_\ell$ and penalty $\eta_a$. Then
\[
\mathbb{E}[C_{\mathrm{passive}}]
=
C_0 - p\Delta + c_p + \eta_p,
\]
\[
\mathbb{E}[C_{\mathrm{agent}}]
=
C_0 - p\Delta + c_p + c_\ell + \eta_a.
\]
Hence passive injection is cheaper whenever
\[
c_\ell > \eta_p - \eta_a.
\]
\end{proposition}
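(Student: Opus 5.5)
The argument is a direct cost decomposition using linearity of expectation, followed by a comparison of the two resulting expressions. I would first make the implicit modeling assumptions explicit. The total cost of either strategy is the sum of four parts: the baseline solve cost, the realized savings, the retrieval overhead, and the irrelevant-context penalty. The savings take the value $\Delta$ on the event that related prior knowledge exists, which has probability $p$, and the value $0$ otherwise.

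\emph{Step one: write the random cost.} Let $B\sim\mathrm{Bernoulli}(p)$ indicate whether reusable knowledge exists.

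For passive injection,
\[
C_{\mathrm{passive}} = C_0' - \Delta B + c_p + \eta_p,
\]
where $\mathbb{E}[C_0']=C_0$.

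For agent-driven retrieval,
\[
C_{\mathrm{agent}} = C_0' - \Delta B + c_p + c_\ell + \eta_a.
\]
Here I read ``additional planning cost'' as saying that agent-driven retrieval pays the same base retrieval cost $c_p$ plus the extra $c_\ell$.

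\emph{Step two: take expectations.} Apply linearity together with $\mathbb{E}[B]=p$, and treat $c_p$, $c_\ell$, $\eta_p$ and $\eta_a$ as expected quantities (or constants). This gives the two displayed formulas in the proposition immediately.

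\emph{Step three: compare.} Subtracting the two expressions, the shared terms $C_0 - p\Delta + c_p$ cancel, leaving
\[
\mathbb{E}[C_{\mathrm{agent}}]-\mathbb{E}[C_{\mathrm{passive}}] = c_\ell + \eta_a - \eta_p .
\]
This difference is positive exactly when $c_\ell > \eta_p - \eta_a$, which is the stated condition.

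\emph{Main obstacle.} The only delicate point is justifying that the savings term $-p\Delta$ is identical for both strategies. This requires that, whenever related knowledge exists, both methods realize the same savings $\Delta$: neither misses the knowledge, and the penalties $\eta$ fully capture any quality difference. It also requires that $c_\ell$ and the penalties are independent of $B$, or at least enter only through their means.

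I would state this as an explicit assumption, namely equal recall conditional on relevance, rather than try to derive it. If the recalls were allowed to differ, say $p_p$ and $p_a$, the same computation would yield the generalized condition
\[
c_\ell > \eta_p - \eta_a - (p_p - p_a)\Delta,
\]
and I would remark on this in passing.
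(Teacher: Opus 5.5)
Your proposal is correct and follows essentially the same route as the paper. The paper's proof just subtracts the two expectations to get $c_\ell + \eta_a - \eta_p$ and reads off the sign condition, while your Steps one and two make explicit (via a Bernoulli indicator and linearity of expectation) the cost model that the paper takes as the definition of the two expected costs. Your added remark that the shared $-p\Delta$ term presupposes equal recall under both strategies, together with the generalized condition $c_\ell > \eta_p - \eta_a - (p_p - p_a)\Delta$, is correct and a useful clarification that the paper leaves implicit.
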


\begin{proof}
Subtract the two expectations:
\[
\mathbb{E}[C_{\mathrm{agent}}]-\mathbb{E}[C_{\mathrm{passive}}]
=
c_\ell + \eta_a - \eta_p.
\]
If the extra planning or wiki-walk overhead exceeds the marginal irrelevant-context penalty of passive injection, passive injection is strictly cheaper in expectation.
\end{proof}

\begin{proposition}[Governance reduces noise amplification]
Suppose noisy memory items reproduce with effective branching factor $\beta>0$, and governance suppresses a fraction $g\in[0,1]$ of noisy items before reuse. If $N_t$ is the noisy memory mass at time $t$, then
\[
\mathbb{E}[N_{t+1} \mid N_t]
\le
\zeta_t + \beta(1-g)N_t,
\]
where $\zeta_t$ is fresh source noise. If $\beta(1-g)<1$, noisy-memory growth is subcritical and remains bounded in expectation.
\end{proposition}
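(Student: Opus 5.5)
We will treat the statement as a branching-process bound and make the modelling assumptions explicit before proving anything. The noisy memory mass is decomposed as
\[
N_{t+1} = \zeta_t' + \sum_{j=1}^{N_t} \xi_j B_j,
\]
where $\zeta_t'$ is fresh source noise entering at step $t$, with $\mathbb{E}[\zeta_t' \mid N_t] \le \zeta_t$. The variable $\xi_j \in \{0,1\}$ indicates that noisy item $j$ survives governance. Its conditional mean is $\mathbb{E}[\xi_j \mid N_t] = 1-g$, or at most $1-g$ if governance suppresses \emph{at least} a fraction $g$. The variable $B_j \ge 0$ is the number of noisy descendants produced when a surviving item is reused, with $\mathbb{E}[B_j \mid \xi_j = 1, N_t] \le \beta$. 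Here ``effective branching factor'' is read as the conditional mean offspring. This reading absorbs the survival of the item itself: an item persisting to the next step counts as one of its own offspring, so that decay and invalidation are already reflected in $\beta$. For non-integer mass $N_t$ we would state the same bound in expectation via a linear mean-offspring operator, so nothing depends on integrality.

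The first claim then follows from conditional linearity. By Wald's identity, or simply linearity of expectation conditional on $N_t$ since $N_t$ is measurable,
\[
\mathbb{E}[N_{t+1} \mid N_t] \le \zeta_t + N_t\,(1-g)\,\beta .
\]
Here we use the tower property, $\mathbb{E}[\xi_j B_j \mid N_t] = \mathbb{E}\big[\xi_j\, \mathbb{E}[B_j \mid \xi_j, N_t] \,\big|\, N_t\big] \le \beta(1-g)$. This holds without independence between suppression and branching as long as the conditional mean bound on $B_j$ holds.

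For boundedness, set $r = \beta(1-g) < 1$ and $m_t = \mathbb{E}[N_t]$. Taking total expectations gives the linear recursion $m_{t+1} \le \zeta_t + r\, m_t$ (with $\zeta_t$ deterministic, or replaced by its mean otherwise). Unrolling it,
\[
m_t \le r^t m_0 + \sum_{s=0}^{t-1} r^{t-1-s}\zeta_s .
\]
If fresh noise is uniformly bounded, $\zeta_s \le \bar\zeta$, then $\sup_t m_t \le m_0 + \bar\zeta/(1-r) < \infty$, and $\limsup_t m_t \le \bar\zeta/(1-r)$. This is the precise meaning of ``subcritical and bounded in expectation''. We would also note that the condition is sharp: if $r \ge 1$ and $\zeta_s \ge \underline\zeta > 0$, the matching lower recursion forces $m_t \to \infty$.

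The main obstacle is not the algebra but the hypotheses, which the statement leaves informal. First, boundedness requires an assumption that $\zeta_t$ is uniformly bounded, or at least has a summable discounted sum; the proof will state this explicitly. Second, the interpretation of $g$ and $\beta$ must be fixed so that the conditional mean factorizes as $\beta(1-g)$, either through an upper bound on the conditional offspring mean given survival, or through independence of the suppression and reproduction mechanisms. We would adopt the conditional-mean formulation, since it is the weakest assumption under which the tower-property step goes through.
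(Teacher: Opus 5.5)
Your proposal is correct and follows the same route as the paper. The paper treats the one-step bound as a linear recursion with ratio $\beta(1-g)$ and concludes boundedness from convergence of the resulting geometric series; you additionally derive the one-step inequality from an explicit survival/offspring decomposition via the tower property, and you state the uniform bound $\zeta_s\le\bar\zeta$ on fresh noise, which the paper's geometric-series step uses only implicitly.
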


\begin{proof}
The recurrence defines a linear branching process with ratio $\beta(1-g)$. If the ratio is less than one, the corresponding geometric series converges and expected noisy-memory mass remains bounded.
\end{proof}

\begin{proposition}[Maturity gating for high-risk actions]
Let the correctness probability of maturity state $\mu$ be $\pi_\mu$, and let action $a$ have benefit $B_a$ when based on correct knowledge and loss $L_a$ scaled by risk multiplier $\lambda_a$ when based on incorrect knowledge. Then
\[
EU(a,\mu)
=
\pi_\mu B_a - (1-\pi_\mu)\lambda_a L_a.
\]
Using maturity level $\mu$ for action $a$ is rational only if
\[
\pi_\mu \ge \frac{\lambda_a L_a}{B_a + \lambda_a L_a}.
\]
Because the right-hand side increases with $\lambda_a$, minimum acceptable maturity must rise with action risk.
\end{proposition}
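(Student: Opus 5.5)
The plan is to treat the statement as a two-outcome expected-utility computation followed by a monotonicity check on the resulting threshold. First, I will model the use of knowledge at maturity $\mu$ for action $a$ as a Bernoulli event. With probability $\pi_\mu$ the knowledge is correct and the action yields benefit $B_a$. With probability $1-\pi_\mu$ it is incorrect and the action yields the risk-scaled loss $-\lambda_a L_a$. By linearity of expectation,
\[
EU(a,\mu)=\pi_\mu B_a-(1-\pi_\mu)\lambda_a L_a .
\]
This gives the first displayed identity directly.

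Second, I will make precise what ``rational'' means. The natural baseline is abstention, that is, not acting on the item, with utility $0$. Using maturity $\mu$ is then rational iff $EU(a,\mu)\ge 0$. Expanding gives
\[
\pi_\mu B_a-\lambda_a L_a+\pi_\mu\lambda_a L_a\ge 0,
\]
which rearranges to $\pi_\mu(B_a+\lambda_a L_a)\ge \lambda_a L_a$. I will assume $B_a>0$, $L_a>0$ and $\lambda_a>0$, so that $B_a+\lambda_a L_a>0$. Dividing by this positive quantity preserves the inequality and yields the stated threshold
\[
\pi^*(\lambda_a)=\frac{\lambda_a L_a}{B_a+\lambda_a L_a}.
\]

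Third, I will show monotonicity in $\lambda_a$. Writing $x=\lambda_a L_a$, the threshold is $f(x)=x/(B_a+x)=1-B_a/(B_a+x)$. This is strictly increasing in $x$ when $B_a>0$. Equivalently, $\partial\pi^*/\partial\lambda_a = B_aL_a/(B_a+\lambda_aL_a)^2>0$. To connect this to the governance rule $\theta(\mathrm{risk}(a))$, I will add the natural assumption that $\pi_\mu$ is nondecreasing in the maturity order. Under that assumption, the set of admissible maturities $\{\mu:\pi_\mu\ge\pi^*(\lambda_a)\}$ is an upper set. Its minimum element $\theta$ therefore weakly increases as $\lambda_a$ grows, which is the claim that the minimum acceptable maturity must rise with action risk.

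The main obstacle is interpretive rather than computational. The statement does not specify the comparison baseline, nor the monotonicity of $\pi_\mu$ in $\mu$. I would state both explicitly as hypotheses: abstention has utility $0$, and $\pi_\mu$ is monotone in $\mu$. If the baseline were instead a nonzero utility $U_0$, the same algebra goes through with threshold $(\lambda_aL_a+U_0)/(B_a+\lambda_aL_a)$. The monotonicity argument survives provided $U_0<B_a$.
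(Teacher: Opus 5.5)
Your proposal is correct and follows the paper's route: the paper's proof simply rearranges $EU(a,\mu)\ge 0$ into the stated threshold and notes that monotonicity in $\lambda_a$ is immediate. Your explicit hypotheses (zero-utility abstention baseline, $B_a, L_a, \lambda_a>0$, and $\pi_\mu$ nondecreasing in maturity) and the derivative $B_aL_a/(B_a+\lambda_aL_a)^2$ only make precise what the paper leaves implicit.
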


\begin{proof}
Rearrange the condition $EU(a,\mu)\ge 0$. Monotonicity in $\lambda_a$ is immediate.
\end{proof}

\section{Interaction-Native Knowledge Harness}
\label{sec:arch}

\subsection*{Architecture}

Figure~\ref{fig:architecture} shows the proposed system. The online path is interaction-native: every incoming event updates detected entities, retrieves a compact, governed neighborhood from the temporal graph, and injects it into the working context buffer before the main agent step. The offline path is knowledge-native: completed workflows are extracted, upserted, invalidated when contradicted, and summarized into a wiki audit surface.

\begin{figure}[t]
\centering
\includegraphics[width=0.98\linewidth]{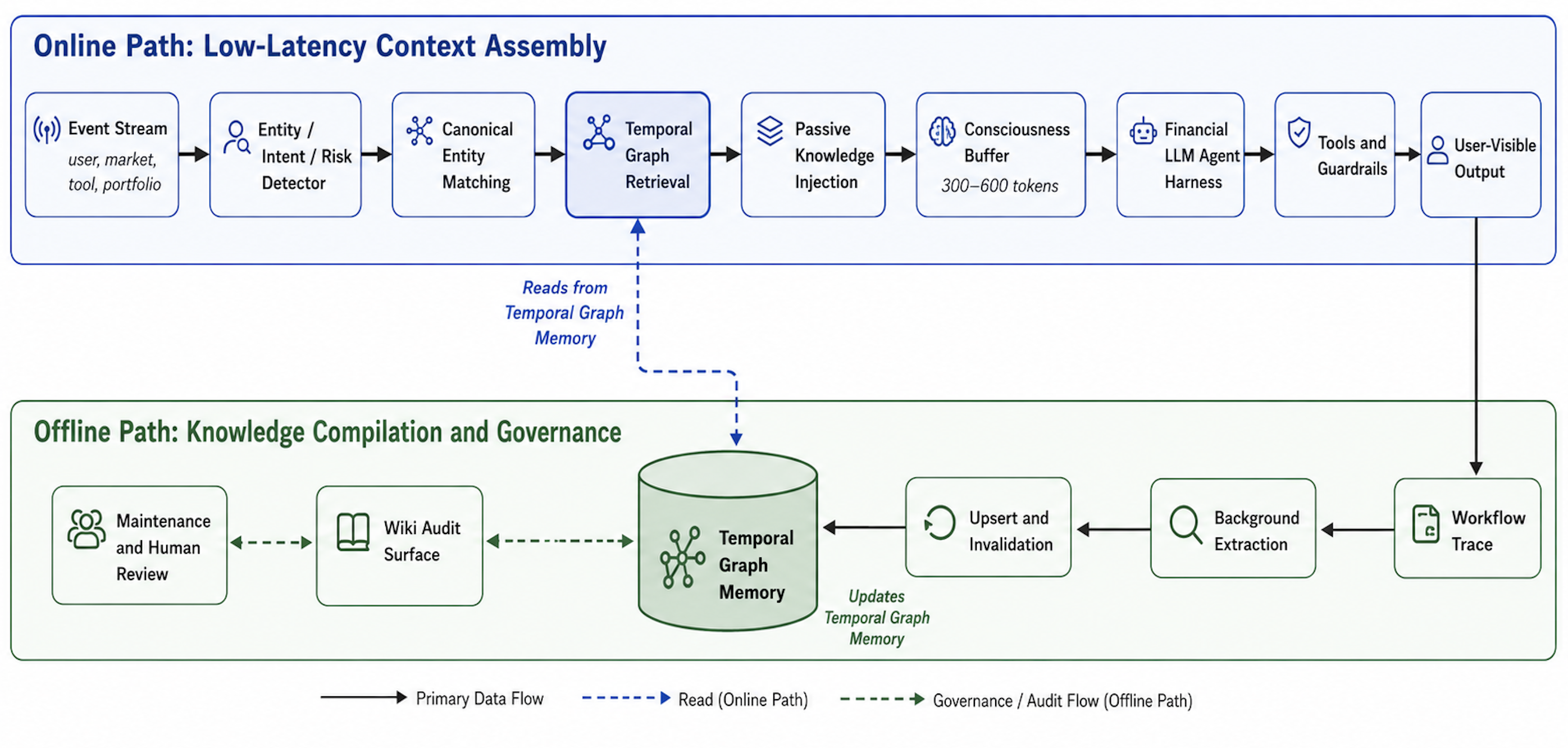}
\caption{Interaction-native knowledge harness. The online path assembles governed low-latency context before the main agent step, while the offline path compiles workflow traces into persistent, auditable long-term knowledge.}
\label{fig:architecture}
\end{figure}

\subsection*{Graph as retrieval substrate, wiki as audit surface}

A key practical design choice is to separate online retrieval from human-readable synthesis. The \emph{temporal graph} is the retrieval substrate: it stores canonical entities, typed relations, provenance, last-validated time, and invalidation. The \emph{wiki} is the audit surface: it stores readable asset pages, trader pages, strategies, risk notes, and maintenance logs. This division keeps online retrieval compact and cheap while preserving inspectability and human review \cite{karpathy2026llmwiki,rasmussen2025zep}.

\subsection*{Baseline comparison}

Table~\ref{tab:baseline-components} contrasts the baselines used in this paper.

\begin{table}[t]
\centering
\footnotesize
\caption{Architectural components present in each baseline.}
\label{tab:baseline-components}
\begin{tabular}{lccccccc}
\toprule
Baseline 
& Tools 
& \shortstack{Persistent\\State} 
& \shortstack{Compiled\\Wiki} 
& \shortstack{Temporal\\Graph} 
& \shortstack{Passive\\Injection} 
& Invalidation 
& \shortstack{Audit\\Trace} \\
\midrule
\textsc{Model-Only}    & $\times$     & $\times$     & $\times$     & $\times$     & $\times$     & $\times$     & Low \\
\textsc{Tool-Agent}    & $\checkmark$ & $\times$     & $\times$     & $\times$     & $\times$     & $\times$     & Low \\
\textsc{Simple-Memory} & $\checkmark$ & $\checkmark$ & $\times$     & $\times$     & $\times$     & $\times$     & Medium \\
\textsc{Wiki-Walk}     & $\checkmark$ & $\checkmark$ & $\checkmark$ & $\times$     & $\times$     & $\times$     & Medium \\
\textsc{KH-NoInv}      & $\checkmark$ & $\checkmark$ & $\checkmark$ & $\checkmark$ & $\checkmark$ & $\times$     & High \\
\textsc{INKH}          & $\checkmark$ & $\checkmark$ & $\checkmark$ & $\checkmark$ & $\checkmark$ & $\checkmark$ & Very high \\
\bottomrule
\end{tabular}
\end{table}

\subsection*{Algorithms}

\begin{algorithm}[H]
\caption{Passive Injection}
\label{alg:inject}
\begin{algorithmic}[1]
\Require event $e_t$, state $S_t$, graph $G_t$, token budget $B$, hop radius $h$
\State $(V_t,\iota_t,\chi_t) \gets \mathrm{Detect}(e_t,S_t)$
\State $V_t \gets \mathrm{Canonicalize}(V_t)$
\State $\mathcal{C}_t \gets \mathcal{N}_h(V_t;G_t)$
\State $\mathcal{C}_t \gets \mathrm{RemoveInvalidated}(\mathcal{C}_t)$
\State $\mathcal{C}_t \gets \mathrm{FilterByGovernance}(\mathcal{C}_t,\chi_t)$
\ForAll{$k \in \mathcal{C}_t$}
    \State compute $s_t(k)$
\EndFor
\State $I_t \gets \mathrm{Compress}(\mathrm{TopB}(\mathcal{C}_t,s_t,B))$
\State \Return $C_t \gets \mathrm{Fuse}(S_t,I_t)$
\end{algorithmic}
\end{algorithm}

\begin{algorithm}[H]
\caption{Background Extraction and Upsert}
\label{alg:extract}
\begin{algorithmic}[1]
\Require completed workflow $w_i$, graph $G_t$
\State $Z_i \gets \mathrm{ExtractCandidateKnowledge}(w_i)$
\State $Z_i \gets \mathrm{AttachEvidenceAndTrust}(Z_i)$
\State $Z_i \gets \mathrm{MatchOrCreateCanonicalEntities}(Z_i,G_t)$
\State $\mathcal{F}_i \gets \mathrm{DetectContradictions}(Z_i,G_t)$
\State $\mathrm{MarkInvalidated}(\mathcal{F}_i)$
\State $G_{t+1} \gets \mathrm{Upsert}(G_t,Z_i)$
\State $\mathrm{UpdateWikiAuditPages}(Z_i,\mathcal{F}_i)$
\State \Return $G_{t+1}$
\end{algorithmic}
\end{algorithm}

\begin{algorithm}[H]
\caption{Maintenance Tick}
\label{alg:maint}
\begin{algorithmic}[1]
\Require temporal graph $G_t$
\State probe $\gets \mathrm{SampleProbeType}()$
\State $\mathrm{RunStalenessSweep}(G_t,\text{probe})$
\State $\mathrm{RunLinkDiscoveryAndMergeChecks}(G_t,\text{probe})$
\State $\mathrm{RecomputeConfidenceAndMaturity}(G_t)$
\State $\mathrm{AutoExecuteLowRiskFixes}(G_t)$
\State $\mathrm{QueueHighImpactChangesForHumanReview}(G_t)$
\State \Return cleaned graph and maintenance log
\end{algorithmic}
\end{algorithm}

\subsection*{Implementation notes}

A production implementation should satisfy four engineering constraints.

\emph{First}, retrieval must be algorithmic and budgeted. For the common case, the system should inject a compact context block without requiring an LLM-driven wiki walk. \emph{Second}, entity matching must collapse aliases such as \texttt{BTC}, \texttt{Bitcoin}, and \texttt{BTCUSDT} into stable canonical entities. \emph{Third}, invalidation must happen at write time when new evidence supersedes older knowledge. \emph{Fourth}, the wiki remains essential, but chiefly as an audit surface rather than an online retrieval primitive.

\section{Experimental Design}

\subsection*{Reported benchmark and public-data extension}

This paper reports results for a \emph{controlled synthetic benchmark} and also specifies, but does not execute, a \emph{public-data replay extension}. Table~\ref{tab:datasets} distinguishes the two.

\begin{table}[t]
\centering
\footnotesize
\caption{Evaluation stages. All reported quantitative results in this paper come from Stage A.}
\label{tab:datasets}
\begin{tabular}{p{1.2cm} p{3.1cm} p{5.5cm} p{3.7cm}}
\toprule
Stage & Status & Data sources & Purpose \\
\midrule
A & Reported & Controlled synthetic workflows, simulated user dialogues, latent preference signals, regime shocks, and stale-knowledge traps & Isolate architectural effects on latency, token cost, reuse, invalidation, and traceability \\
B & Specified only & Public market and filing replay built from FRED, SEC EDGAR, and Binance public market-data interfaces \cite{fred2026api,sec2025edgar,binance2026api} plus simulated user dialogues & Historical replay and human-scored extension for future work \\
\bottomrule
\end{tabular}
\end{table}

The implemented artifact uses 24 seeds, 4 rounds, and 80 episodes per round. This yields 7{,}680 workflows per baseline and 46{,}080 baseline-conditioned evaluations in total. The four task families are \emph{market analysis}, \emph{portfolio review}, \emph{copy-trading evaluation}, and \emph{trade preparation}. Round~1 is a cold start. Round~2 introduces user preference signals. Round~3 injects regime or protocol shocks. Round~4 measures post-shock reuse.

\subsection*{Assumptions}

The reported suite is architecture-level rather than vendor-level. We therefore make four explicit assumptions:

\begin{enumerate}[leftmargin=1.5em]
    \item \textbf{A1 (Model abstraction).} Baselines are modeled by characteristic token budgets, retrieval behavior, and latency distributions rather than tied to one proprietary API.
    \item \textbf{A2 (Cost abstraction).} Token cost is modeled as \$3.00 per million tokens plus \$0.002 per tool call.
    \item \textbf{A3 (Latent ground truth).} Task quality, stale-memory violations, and traceability are assessed against simulator-defined gold requirements.
    \item \textbf{A4 (Scope).} Reported results validate system behavior, not live trading profitability.
\end{enumerate}

\subsection*{Baselines}

We compare six systems: \modelonly, \toolagent, \simplemem, \wikiwalk, \khnoinv, and the full \inkh. The key comparison is between \wikiwalk\ and \inkh: both have compiled persistent knowledge, but only \inkh\ uses passive injection and write-time invalidation.

\subsection*{Metrics}

The main metrics are:
\begin{align*}
\text{Context precision} &= \frac{\text{gold hits}}{\max(1,\text{retrieved items})},\\
\text{Repeated error reduction} &= \frac{(1-Q_1) - (1-Q_T)}{1-Q_1},\\
\text{Cost efficiency} &= \frac{Q}{\text{tokens}/1000}.
\end{align*}
We also report latency, total tokens, stale-knowledge usage, decision traceability, and estimated serving cost.

\subsection*{Statistical testing}

All confidence intervals are 95\% bootstrap intervals over seed-level means with 3{,}000 resamples. Pairwise comparisons against \inkh\ use paired Wilcoxon signed-rank tests over the 24 seed-level means.

\section{Results}
\label{sec:results}

\subsection*{Main quantitative results}

Table~\ref{tab:main-results} presents the central results. The full \inkh\ baseline has the highest task quality and traceability among all systems, while also delivering lower latency than all nontrivial retrieval baselines.

\begin{table}[t]
\centering
\footnotesize
\caption{Main results over 7{,}680 workflows per baseline.}
\label{tab:main-results}
\resizebox{\linewidth}{!}{
\begin{tabular}{lrrrrrrr}
\toprule
Baseline & Quality & Latency (ms) & Tokens & Context precision & Stale usage & Traceability & Est.\ cost (\$) \\
\midrule
\inkh            & \textbf{0.815} & \textbf{900.2} & 1540.3 & \textbf{0.329} & \textbf{0.009} & \textbf{0.999} & 0.0086 \\
\khnoinv         & 0.765 & 960.0 & 1550.2 & \textbf{0.329} & 0.271 & 0.928 & 0.0087 \\
\wikiwalk        & 0.707 & 5281.1 & 8697.3 & 0.163 & 0.271 & 0.538 & 0.0301 \\
\simplemem       & 0.657 & 1931.1 & 3039.7 & 0.246 & 0.233 & 0.501 & 0.0131 \\
\toolagent       & 0.540 & 1351.6 & 1279.5 & 0.000 & 0.000 & 0.320 & 0.0078 \\
\modelonly       & 0.440 & 654.5  & \textbf{880.0} & 0.000 & 0.000 & 0.080 & \textbf{0.0026} \\
\bottomrule
\end{tabular}
}
\end{table}

Relative to \wikiwalk, \inkh\ reduces latency by $82.95\%$ and token load by $82.29\%$, while improving quality by $0.108$, reducing stale-memory usage by $96.58\%$, and raising decision traceability by $0.461$. Relative to \simplemem, \inkh\ improves quality by $0.157$ and lowers stale usage by $96.02\%$. Relative to \khnoinv, \inkh\ yields a $0.050$ increase in quality and a $96.58\%$ reduction in stale-memory usage with nearly identical token budget.

Figure~\ref{fig:frontier} visualizes the quality-latency frontier.

\begin{figure}[t]
\centering
\begin{tikzpicture}
\begin{axis}[
    width=0.97\linewidth,
    height=6.6cm,
    xlabel={Average latency (ms)},
    ylabel={Average task quality},
    xmin=500, xmax=5600,
    ymin=0.40, ymax=0.85,
    grid=major
]
\addplot[only marks, mark=*, mark size=2pt] coordinates {
    (654.5,0.440)
    (1351.6,0.540)
    (1931.1,0.657)
    (5281.1,0.707)
    (960.0,0.765)
    (900.2,0.815)
};
\node[anchor=west,font=\scriptsize] at (axis cs:654.5,0.440) {model-only};
\node[anchor=west,font=\scriptsize] at (axis cs:1351.6,0.540) {tool-agent};
\node[anchor=west,font=\scriptsize] at (axis cs:1931.1,0.657) {simple-memory};
\node[anchor=west,font=\scriptsize] at (axis cs:5281.1,0.707) {wiki-walk};
\node[anchor=west,font=\scriptsize] at (axis cs:960.0,0.765) {KH-noinv};
\node[anchor=west,font=\scriptsize] at (axis cs:900.2,0.815) {INKH};
\end{axis}
\end{tikzpicture}
\caption{Quality-latency frontier. \inkh\ occupies the best region among persistent-memory baselines.}
\label{fig:frontier}
\end{figure}

\subsection*{Significance against memory baselines}

Table~\ref{tab:significance} reports paired comparisons against the three most relevant memory baselines. All gains are statistically significant.

\begin{table}[t]
\centering
\footnotesize
\caption{Paired comparison of \inkh\ against memory baselines. Differences are \inkh\ minus comparator.}
\label{tab:significance}
\resizebox{\linewidth}{!}{
\begin{tabular}{lccccc}
\toprule
Comparator & $\Delta$Quality [95\% CI] & $\Delta$Latency [95\% CI] & $\Delta$Stale [95\% CI] & $\Delta$Trace [95\% CI] & $p_{\max}$ \\
\midrule
\khnoinv & $0.050\ [0.048,\,0.051]$ & $-59.8\ [-60.9,\,-58.7]$ & $-0.261\ [-0.272,\,-0.250]$ & $0.071\ [0.070,\,0.072]$ & $<1.8\times 10^{-5}$ \\
\simplemem & $0.157\ [0.156,\,0.158]$ & $-1030.9\ [-1032.6,\,-1029.1]$ & $-0.223\ [-0.233,\,-0.214]$ & $0.498\ [0.497,\,0.499]$ & $<1.9\times 10^{-5}$ \\
\wikiwalk & $0.108\ [0.107,\,0.109]$ & $-4380.8\ [-4383.0,\,-4378.8]$ & $-0.261\ [-0.272,\,-0.250]$ & $0.461\ [0.460,\,0.462]$ & $<1.8\times 10^{-5}$ \\
\bottomrule
\end{tabular}
}
\end{table}

\subsection*{Shock adaptation and repeated error reduction}

The most important empirical distinction appears after shocks are introduced in Round~3. Table~\ref{tab:round-dynamics} reports round-wise quality and repeated error reduction. Only the full \inkh\ system improves materially from Round~1 to Round~4; all other baselines are flat or regress after shock introduction.

\begin{table}[t]
\centering
\footnotesize
\caption{Round-by-round quality dynamics and repeated error reduction.}
\label{tab:round-dynamics}
\begin{tabular}{lccccc}
\toprule
Baseline & $Q_1$ & $Q_2$ & $Q_3$ & $Q_4$ & Repeated error reduction \\
\midrule
\inkh      & 0.780 & 0.808 & 0.824 & \textbf{0.847} & \textbf{0.307} \\
\khnoinv   & 0.760 & 0.787 & 0.755 & 0.758 & -0.008 \\
\simplemem & 0.649 & 0.675 & 0.650 & 0.656 & 0.022 \\
\wikiwalk  & 0.701 & 0.729 & 0.697 & 0.700 & -0.005 \\
\bottomrule
\end{tabular}
\end{table}

Figure~\ref{fig:stale-rounds} makes the mechanism visible: after shocks, stale-memory usage spikes in all memory baselines except the full invalidation-enabled system.

\begin{figure}[t]
\centering
\begin{tikzpicture}
\begin{axis}[
    width=0.97\linewidth,
    height=6.4cm,
    xlabel={Round},
    ylabel={Stale-knowledge usage},
    xmin=0.8, xmax=4.2,
    ymin=0, ymax=0.68,
    xtick={1,2,3,4},
    grid=major,
    legend style={at={(0.97,0.97)},anchor=north east,font=\scriptsize}
]
\addplot+[mark=*] coordinates {(1,0.000) (2,0.000) (3,0.037) (4,0.000)};
\addplot+[mark=square*] coordinates {(1,0.000) (2,0.000) (3,0.458) (4,0.624)};
\addplot+[mark=triangle*] coordinates {(1,0.000) (2,0.000) (3,0.398) (4,0.532)};
\addplot+[mark=diamond*] coordinates {(1,0.000) (2,0.000) (3,0.458) (4,0.624)};
\legend{INKH, KH-noinv, Simple-memory, Wiki-walk}
\end{axis}
\end{tikzpicture}
\caption{Stale-knowledge usage by round. Write-time invalidation is the key differentiator after shocks.}
\label{fig:stale-rounds}
\end{figure}

\subsection*{Task-family results}

Table~\ref{tab:tasks} shows quality by task family. \inkh\ is strongest across all four families and the largest gains are on the most operationally sensitive tasks: copy-trading evaluation and trade preparation.

\begin{table}[t]
\centering
\footnotesize
\caption{Task-family quality by baseline.}
\label{tab:tasks}
\resizebox{\linewidth}{!}{
\begin{tabular}{lrrrrrr}
\toprule
Task family & \inkh & \khnoinv & \wikiwalk & \simplemem & \toolagent & \modelonly \\
\midrule
Copy-trading evaluation & \textbf{0.819} & 0.753 & 0.693 & 0.657 & 0.537 & 0.437 \\
Market analysis & \textbf{0.793} & 0.748 & 0.693 & 0.665 & 0.556 & 0.456 \\
Portfolio review & \textbf{0.819} & 0.776 & 0.718 & 0.675 & 0.538 & 0.437 \\
Trade preparation & \textbf{0.827} & 0.783 & 0.723 & 0.633 & 0.531 & 0.431 \\
\bottomrule
\end{tabular}
}
\end{table}

On high-risk workflows only (copy-trading and trade preparation plus shock-tagged episodes), \inkh\ reaches quality $0.822$, stale-memory usage $0.018$, and traceability $0.999$, compared with $0.766$, $0.336$, and $0.923$ for \khnoinv, respectively. This is exactly where maturity and invalidation should matter most.

\subsection*{Governance ablation}

A possible alternative explanation for the gains is simply that \inkh\ stores more memory. Table~\ref{tab:inventory} rules that out. Both \inkh\ and \khnoinv\ accumulate essentially the same amount of knowledge and the same maturity mass. The difference is that \inkh\ invalidates obsolete memory rather than letting it persist.

\begin{table}[t]
\centering
\footnotesize
\caption{Knowledge inventory over 24 seeds. Both KH variants ingest the same amount of knowledge; only \inkh\ invalidates obsolete items.}
\label{tab:inventory}
\begin{tabular}{lccccc}
\toprule
Baseline & Final items & New items added & Invalidated items & Verified items & Proven items \\
\midrule
\khnoinv & 13.96 & 5.96 & 0.00 & 6.08 & 7.88 \\
\inkh    & 13.96 & 5.96 & 2.96 & 6.08 & 7.88 \\
\bottomrule
\end{tabular}
\end{table}

This result is important for product design. It implies that better financial cognition does not come merely from remembering more. It comes from \emph{remembering under governance}.

\section{Discussion and Limitations}

\subsection*{What the results imply for real products}

Four product lessons follow directly from the experiments.

\textbf{Passive injection should replace wiki walking in the foreground path.}
The agent-driven wiki baseline performs materially worse on both latency and token cost. In chat-like financial workflows, the system must assemble context before the model reasons, not ask the model to perform a multi-step search over its own memory.

\textbf{The graph should serve retrieval; the wiki should serve audit and review.}
This division preserves human interpretability without paying the full online cost of document-style traversal.

\textbf{Invalidation matters more than additional memory volume.}
Table~\ref{tab:inventory} shows that the performance gap between \inkh\ and \khnoinv\ arises from invalidation, not from more stored items. This is especially relevant in finance, where outdated assumptions can remain superficially plausible after regime breaks.

\textbf{The cognition plane and execution plane should be designed separately but coherently.}
Borjigin et al.\ show that execution safety must be enforced where side effects occur \cite{borjigin2026execution,borjigin2025safe}. Our results suggest an upstream complement: action survivability should be paired with cognition survivability. The user should not need to manually police the agent's memory state, any more than they should manually police its last-mile execution permissions.

\subsection*{Limitations}

This paper should be read with four limitations in mind.

First, the reported evaluation is a \emph{controlled synthetic benchmark}. It is designed to isolate architecture-level properties---latency, memory invalidation, reuse, and traceability---not live trading profitability. Second, the quality metric is simulator-defined rather than human-labeled. Third, the current artifact abstractly simulates graph-backed retrieval and serving behavior rather than instantiating a full production graph database. Fourth, public-data replay using FRED, EDGAR, and Binance interfaces is specified but not yet reported.

Accordingly, the right claim is not that \inkh\ proves financial alpha. The right claim is narrower and more architectural: \emph{interaction-native, governed knowledge harnesses are a better systems target for financial agents than turn-based retrieval plus disposable context}.

\section{Conclusion}

This paper introduced the interaction-native knowledge harness for continuous financial cognition. The central idea is simple: a financial agent should not force the user to manage the system's cognitive complexity. Instead, the system should absorb that complexity by continuously maintaining structured state, injecting the right context at the right time, and transforming interaction traces into governed long-term knowledge.

In a reproducible benchmark, this design improves quality, lowers latency relative to nontrivial memory baselines, sharply reduces stale-memory usage, and substantially increases decision traceability. The strongest result is not merely that \inkh\ remembers more. It is that \inkh\ remembers \emph{under governance}.

The broader implication is practical. If future financial agents are to be adopted in real workflows, they will need both continuous cognition and survivable execution. This paper addresses the first requirement. Recent survivability-aware execution work addresses the second \cite{borjigin2026execution,borjigin2025safe}. Together they suggest a more complete research program for financial AI: \emph{interaction-native cognition upstream, survivability-aware execution downstream}.

\appendix

\section{Practical Engineering Appendix}

\subsection{Implementation defaults}

Table~\ref{tab:defaults} separates \emph{required architectural commitments} from \emph{suggested defaults}. The latter are implementation recommendations rather than claims of optimality.

\begin{table}[H]
\centering
\footnotesize
\caption{Implementation defaults. ``Required'' means necessary for the claimed architecture; ``Suggested'' means recommended starting value.}
\label{tab:defaults}
\begin{tabular}{p{5.0cm} p{1.5cm} p{2.0cm} p{5.3cm}}
\toprule
Setting & Status & Default & Note \\
\midrule
Passive foreground injection & Required & On & Central architectural feature \\
Working context buffer budget & Required & 300--600 tokens & Experiment default: 600 \\
Temporal graph memory & Required & Enabled & Canonical entities + temporal edges \\
Wiki audit surface & Required & Enabled & Human-readable review layer \\
Write-time invalidation & Required & Enabled in \inkh & Key governance feature \\
Maturity gating & Required & Enabled & Strictest for high-risk workflows \\
Embedding dimension & Suggested & 1024 & Vendor/model-agnostic default \\
Structural candidate pool & Suggested & 100--300 edges & Before reranking and compression \\
Hop radius & Suggested & 1--2 hops & Entity-neighborhood retrieval \\
Top returned items & Suggested & 12--20 compact items & Before template compression \\
Simple-memory budget & Suggested & 1200 tokens & Experiment baseline \\
Wiki-walk budget & Suggested & 4200 tokens & Experiment baseline \\
Maintenance probe mix & Suggested & 30/20/15/15/10/10 & Hub/link/quality/stale/merge/split \\
\bottomrule
\end{tabular}
\end{table}

\subsection{Synthetic benchmark mechanics}

The released artifact implements the benchmark as a deterministic simulator with seed-controlled randomness. The quality function is:
\[
Q
=
\mathrm{clip}\!\left(
q_b
+
\beta_r(r-1)
+
\beta_h \cdot \mathrm{hits}
-
\beta_m \cdot \mathrm{missing}
-
\beta_s \cdot \mathrm{stale}
+
\epsilon
\right),
\]
where $q_b$ is a baseline-specific prior, $r$ is the round index, $\epsilon \sim \mathcal{N}(0,\sigma^2)$ with $\sigma=0.018$, and $\beta_h,\beta_s$ vary by baseline. For the KH variants, the retrieval-hit bonus is capped at $0.16$ with coefficient $0.028$. For \wikiwalk\ it is capped at $0.13$ with coefficient $0.024$. For \simplemem\ it is capped at $0.12$ with coefficient $0.022$. Missing gold requirements incur a penalty of $0.02$ per miss. Stale-memory use incurs a penalty of $0.11$ for \wikiwalk, \simplemem, and \khnoinv, and $0.04$ for the full \inkh.

\subsection{Data schemas}

A minimal raw evidence record:
\begin{lstlisting}
{
  "chunk_id": "raw_000123",
  "source_type": "market_api",
  "source_ref": "binance:BTCUSDT:1h",
  "timestamp": "2026-05-13T08:00:00Z",
  "trust_tier": "high",
  "content": "... immutable source payload ..."
}
\end{lstlisting}

A minimal entity record:
\begin{lstlisting}
{
  "entity_id": "asset:BTC",
  "canonical_name": "BTC",
  "aliases": ["Bitcoin", "XBT", "BTCUSDT"],
  "entity_type": "ASSET",
  "summary": "Primary crypto asset used as liquidity anchor.",
  "updated_at": "2026-05-13T08:05:00Z"
}
\end{lstlisting}

A minimal edge record:
\begin{lstlisting}
{
  "edge_id": "edge_004512",
  "src": "asset:BTC",
  "dst": "risk:slippage",
  "relation_type": "affected_by",
  "description": "Observed slippage rises under high volatility.",
  "evidence_ids": ["raw_000123", "trace_000045"],
  "confidence": 0.83,
  "maturity": "verified",
  "regime_tag": "high_volatility",
  "valid_at": "2026-05-10T00:00:00Z",
  "invalid_at": null,
  "updated_at": "2026-05-13T08:05:00Z"
}
\end{lstlisting}

\subsection{Reproduction}

The released artifact contains configuration files, simulator code, synthetic data, result tables, and figure-generation scripts. To reproduce the synthetic benchmark, run:

\begin{verbatim}
python scripts/run_synthetic_suite.py
\end{verbatim}

\noindent This regenerates per-workflow logs, summary tables, confidence intervals, paired tests, and result figures.

\end{document}